\documentclass[review]{elsarticle}
\usepackage[linesnumbered,procnumbered,ruled]{algorithm2e}
\usepackage{amsmath,amsthm}
\usepackage{amssymb}

\usepackage{graphicx} 
\usepackage{float} 
\usepackage{amsmath}
\usepackage{hyperref} 
\usepackage{amssymb}

\usepackage{multirow}

\usepackage{tcolorbox}

\DeclareMathOperator{\vect}{vec}

\newcommand{\abs}[1]{\left\lvert#1\right\rvert}

\makeatletter
\def\ps@pprintTitle{%
 \let\@oddhead\@empty
 \let\@evenhead\@empty
 \def\@oddfoot{}%
 \let\@evenfoot\@oddfoot}
\makeatother

\journal{Elseiver}









\bibliographystyle{elsarticle-num}

\begin{document}

\begin{frontmatter}

\title{Updating Singular Value Decomposition for Rank One Matrix Perturbation}

\author{Ratnik Gandhi}
\ead{ratnik.gandhi@ahduni.edu.in}
\author{Amoli Rajgor}
\ead{amoli.rajgor@iet.ahduni.edu.in}

\address{School of Engineering \& Applied Science, Ahmedabad University, Ahmedabad-380009, India}

\begin{abstract}
An efficient Singular Value Decomposition (SVD) algorithm is an important tool for distributed and streaming computation in big data problems. It is observed that update of singular vectors of a rank-1 perturbed matrix is similar to a Cauchy matrix-vector product. With this observation, in this paper, we present an efficient method for
updating Singular Value Decomposition of rank-1 perturbed matrix in $O(n^2 \ \text{log}(\frac{1}{\epsilon}))$ time. The method uses 
Fast Multipole Method (FMM) for updating singular vectors in $O(n \ \text{log} (\frac{1}{\epsilon}))$ time, where $\epsilon$ is the precision of computation. 
\end{abstract}

\begin{keyword}
Updating SVD \sep Rank-1 perturbation \sep Cauchy matrix\sep Fast Multipole Method
\end{keyword}

\end{frontmatter}


\section{Introduction}
	SVD is a matrix decomposition technique having wide range of applications, such as image/video compression, real time recommendation system, text mining (Latent Semantic Indexing (LSI)), signal processing, pattern recognition, etc. Computation  of  SVD  is rather simpler in case of centralized system where entire data matrix is available at a single location. It is more complex when the matrix is distributed over a network of devices. Further, increase in  complexity is attributed to the real-time arrival of new data. Processing of data over distributed stream-oriented systems require efficient algorithms that generate results and update them in real-time. In streaming environment the data is continuously updated and thus the output of any operation performed over the data must be updated accordingly. In this paper, a special case of SVD updating algorithm is presented where the updates to the existing data are of rank-1. \par
	
Organization of the paper is as follows: Section \ref{sec:Rank1SVD} motivates the problem of updating SVD for a rank-1 perturbation and presents a characterization to look at the problem from matrix-vector product point of view. An existing algorithm for computing fast matrix-vector product using interpolation is discussed in Section \ref{sec:FASTOriginal}. Section \ref{sec:IFASTAlgo} introduces 
Fast Multipole Method (FMM).
In Section \ref{sec:rank1SVDUpdate} we present an improved algorithm based on FMM for rank-1 SVD update that runs in $O(n^2 \log \frac{1}{\epsilon})$ time, where real $\epsilon >0$ is a desired accuracy parameter. Experimental results of the presented algorithms are given in Section \ref{sec:expResults}.
 
For completeness we have explained in details matrix factorization (\ref{app:MatFact}), solution to Sylvester Matrix (\ref{app:sylvestersolution}), FAST algorithm for Cauchy matrix vector product (\ref{app:fastAlgo}) and Fast Multipole Method (\ref{app:FMMAlgortihm}) at the end of this paper.\par
		
\section{Related Work}
In a series of work Gu and others \cite{gu93,ge93,ge94,ge95} present work on rank-one approximate SVD update. Apart from the low-rank SVD update, focus of their work is to discuss numerical computations and related accuracies in significant details. This leads to accurate computation of singular values, singular vectors and Cauchy matrix-vector product. Our work differs from this work as follows:  we use matrix factorization  that is explicitly based on solution of Sylvester equation \cite{stange08} to reach Eq. \eqref{eqn:UCTildeProduct} that computes updated singular vectors (see Section \ref{ssec:UpdateSingularVectors}). Subsequently, we reach an equation, Eq. \eqref{eqn:functionf(x)} (similar to equation (3.3) in \cite{ge93}). Further, we show that with this new matrix decomposition we reach the computational complexity $O(n^2 \log \frac{1}{\epsilon})$ for updating rank-1 SVD.

\section{SVD of Rank-One Perturbed Matrices}\label{sec:Rank1SVD}

	Let SVD of a $m \times n$ matrix $A = U\Sigma V^\top $.\footnote{We consider the model and characterization as in \cite{stange08}. A detailed factorization of this matrix is given in \ref{app:MatFact}.}  Where, $U \in \mathbb{R}^{m\times m}$, $\Sigma \in \mathbb{R}^{m\times n}$ and $V \in \mathbb{R}^{n\times n}$, where, without loss of generality, we assume, $m \leq n$. Let there be a rank-one update $ab^\top $ to matrix $A$ given by, 
	
	\begin{equation}\label{eqn:APurturbation1}
	\hat{A}  =  A + ab^\top  
	\end{equation}
	
	and let $\hat{U}\hat{\Sigma}\hat{V}^\top $ denote the new(updated) SVD, where $a \in \mathbb{R}^{m}$, $b \in \mathbb{R}^{n}$,
	
%
	Thus,
	
	\begin{equation}\label{eqn:AAT1}
	\hat{A}\hat{A}^\top = \hat{U}\hat{\Sigma}\hat{\Sigma}^\top \hat{U}^\top  .
	\end{equation}
	An algorithm for updating SVD of a rank-1 perturbed matrix 
			is given in Bunch and Nielsen \cite{bn78}. The algorithm updates singular values using characteristic polynomial and computes the updated singular vectors explicitly using the updated singular values.
	From (\ref{eqn:APurturbation1}),
	\begin{eqnarray}
	\hat{A}\hat{A}^\top  & = & (U \Sigma V^\top  +ab^\top )(U \Sigma V^\top  + ab^\top )^\top  \notag\\
	& = & U \Sigma V^\top V \Sigma^\top  U^\top  + \underbrace{U \Sigma V^\top b}_{\tilde{b}}a^\top  + a\underbrace{b^\top V \Sigma^\top U^\top }_{\tilde{b}^\top } + a\underbrace{b^\top b}_{\beta}a^\top \notag\\
	\hat{A}\hat{A}^\top  & = & U\Sigma\Sigma^\top U^\top  + \tilde{b}a^\top  + a\tilde{b}^\top  + \beta aa^\top\label{eqn:AAT3Rank11}
	\end{eqnarray}
	
		Where, $\tilde{b} = U \Sigma V^\top b $, \ $\tilde{b}^\top  = b^\top V \Sigma^\top U^\top $ and $\beta = b^\top b$.\\
	From (\ref{eqn:AAT3Rank11}) it is clear that the problem of rank-1 update (\ref{eqn:APurturbation1}) is modified to problem of three rank-1 updates (that is further converted to two rank-1 updates in (\ref{eqn:U+D+U+T1})).
	From (\ref{eqn:AAT1}) and (\ref{eqn:AAT3Rank11}) we get,

	\begin{eqnarray}
	\hat{U}\underbrace{\hat{\Sigma}\hat{\Sigma}^\top }_{\hat{D}}\hat{U}^\top  & = & U\underbrace{\Sigma\Sigma^\top }_{D}U^\top  + \tilde{b}a^\top  + a\tilde{b}^\top  + \beta aa^\top \notag\\
	\hat{U}\hat{D}\hat{U}^\top  & = & \underbrace{UDU^\top  + \rho_1a_1a_1^\top }_{\tilde{U}\tilde{D}\tilde{U}^\top } + \rho_2b_1b_1^\top . \label{eqn:U+D+U+T1}
	\end{eqnarray}
	Refer Appendix \ref{app:MatFact} Eq. (\ref{eqn:U+D+U+T}) for more details.
	Similar computation for right singular vectors is required.

			The following computation is to be done for each rank-1 update,  i.e., the procedure below will repeat four times, two times each for updating left and right singular vectors. From (\ref{eqn:U+D+U+T1}) we have,
			\begin{eqnarray}
			\tilde{U}\tilde{D}\tilde{U}^\top  & = & UDU^\top  + \rho_1a_1a_1^\top  \label{eqn:UDUTzRankoneUpdate1}\\
			& = & U\underbrace{(D + \rho_1\bar{a}\bar{a}^\top )}_{B}U^\top , \label{eqn:tildeUDU1}
			\end{eqnarray}
			where $\bar{a} = U^\top a_1$.
			From (\ref{eqn:tildeUDU1}) we have,
			\begin{eqnarray}
			B & := & D + \rho_1\bar{a}\bar{a}^\top \label{eqn:BDRank11}\\		
			B & = & \tilde{C}\tilde{D}\tilde{C}^\top  \text{~(Schur-decomposition).}\label{eqn:BTildeCDC1}	
			\end{eqnarray}
			From (\ref{eqn:BTildeCDC1}) and (\ref{eqn:tildeUDU1}) we get,
			
			\begin{eqnarray}
			\tilde{U}\tilde{D}\tilde{U}^\top   =  \underbrace{U(\tilde{C}}_{\tilde{U}}\tilde{D}\underbrace{\tilde{C}^\top )U^\top }_{\tilde{U}^\top }.
			\end{eqnarray}
			After adding the rank-1 perturbation to $UDU^\top$ in  (\ref{eqn:UDUTzRankoneUpdate1}), the updated singular vector matrix is given by matrix-matrix product
			\begin{equation}
			\tilde{U} = U \tilde{C} \label{eqn:UpdateUbyUCtilde1}.
			\end{equation} 
	
		Stange \cite{stange08}, extending
		the work of \cite{bn78}, presents an efficient way of updating SVD by exploring the underlying structure of the matrix-matrix computations of (\ref{eqn:UpdateUbyUCtilde1}). 

	\subsection{Updating Singular Values}\label{ssec:UpdateSingularvalues} 
	An approach for computing singular values is through eigenvalues. Given a matrix $\hat{S} = S + \rho uu^\top$, its eigenvalues $\tilde{d}$ can be computed in $O(n^2)$ numerical operations by solving characteristic polynomial of $(S + \rho uu^\top )x = \tilde{d}x$, where $S= \text{diag}(d_i)$. 
	Golub \cite{golub73} has shown that the above characteristic polynomial has following form. 
	
	\begin{equation}
			w(\tilde{d}) = 1 + \rho \sum_{i=1}^{n} \frac{u_i^2}{d_i - \tilde{d}}.\label{eqn:GolubEigenvalUpdate}
			\end{equation}

%
%
	Note that in the equation above $\tilde{d}$ is an unknown and thus, though the polynomial function is structurally similar to Eq. (\ref{eqn:functionf(x)}), we can not use FMM for solving it.
	
	Recall, in order to compute singular values $\hat{D}$ of updated matrix $\hat{A}$ we need to update $D$ twice as there are two symmetric rank-1 updates, i.e., for $$\tilde{U}\tilde{D}\tilde{U}^\top  = UDU^\top  + \rho_1a_1a_1^\top $$ 
	we will update $B = D + \rho_1\bar{a}\bar{a}^\top $ Eq. (\ref{eqn:tildeUDU1}) and similarly for 
	$$\hat{U}\hat{D}\hat{U}^\top  = \tilde{U}\tilde{D}\tilde{U}^\top  + \rho_2b_1b_1^\top $$ 
	we will update $B_1 = \tilde{D} + \rho_2\bar{b}\bar{b}^\top $. \\

	At times, while computing eigen-system, some of the eigenvalues and eigenvectors are known (This happens when there is some prior knowledge available about the eigenvalues or when the eigenvalues are approximated by using methods such as power iteration.). In such cases efficient SVD update method should focus on updating unknown eigen values and eigen vectors. \textit{Matrix Deflation} is the process of eliminating known eigenvalue from the matrix.
	
	Bunch, Nielsen and Sorensen \cite{bnc78} extended Golub's work by bringing the notion of deflation. They presented a computationally efficient method for computing singular values by deflating the system for cases: (1) when some values of $\bar{a}$ and $\bar{b}$ are zero, (2) $\abs{\bar{a}} = 1$ and $\abs{\bar{b}} = 1$ and (3) $B$ and $B_1$ has eigenvalues with repetition (multiplicity more than one). After deflation,
	the singular values of the updated matrix $\hat{D}$  can be obtained by Eq. (\ref{eqn:GolubEigenvalUpdate}).
	
%
	\subsection{Updating Singular Vectors}\label{ssec:UpdateSingularVectors}
	In order to update singular vectors the matrix-matrix product of (\ref{eqn:UpdateUbyUCtilde1}) is required. A naive method for matrix multiplication has complexity $O(n^3)$. We exploit matrix factorization in Stange \cite{stange08} that shows the structure of matrix $\tilde{C}$ to be Cauchy.\\
	
	From (\ref{eqn:BDRank11}) and (\ref{eqn:BTildeCDC1}) we get,	
\begin{eqnarray}
\tilde{C}\tilde{D}\tilde{C}^\top & = & D + \rho_1\bar{a}\bar{a}^\top  \notag\\
D\tilde{C} - \tilde{C}\tilde{D}& = & - \rho_1\bar{a}\bar{a}^\top \tilde{C}. \label{eqn:DC-CD}
\end{eqnarray}
Equation (\ref{eqn:DC-CD}) is Sylvester equation with solution,
\begin{equation}
(I_n \otimes D + (-\tilde{D})^\top  \otimes I_n)\vect \tilde{C} = \vect (- \rho_1\bar{a}\bar{a}^\top \tilde{C}).\label{eqn:SolnDC-CD}
\end{equation}

Simplifying L.H.S. of (\ref{eqn:SolnDC-CD}) for $\tilde{C}$ we get,
\begin{equation}
\tilde{C} = \left[
\begin{array}{ccc}
\bar{a}_1 && \\
&\ddots&\\
& & \bar{a}_n
\end{array} \right]
\left[
\begin{array}{ccc}
(\mu_1 - \lambda_1)^{-1} & \ldots & (\mu_n - \lambda_1)^{-1}\\
\vdots & &\vdots \\
(\mu_1 - \lambda_n)^{-1} & \ldots & (\mu_n - \lambda_n)^{-1}
\end{array} \right]
\left[
\begin{array}{ccc}
\bar{a}^\top c_1 && \\
&\ddots&\\
& & \bar{a}^\top c_n
\end{array} \right].
\end{equation}

Where, $\tilde{C} = \left[\begin{array}{ccc}c_1 &\ldots & c_n\end{array}\right]$ and from R.H.S. of (\ref{eqn:SolnDC-CD}), $c_i \in \mathbb{R}^n$ are  columns of the form, 
\begin{equation}
c_i = \rho_1 \left[\begin{array}{c} \frac{\bar{a}_1}{\mu_i - \lambda_1}\\\vdots\\\frac{\bar{a}_n}{\mu_i - \lambda_n}\end{array}\right] \bar{a}^\top c_i.\label{eqn:civector}
\end{equation}
Simplifying (\ref{eqn:civector}) further we get,

\begin{align*}
c_i & = \rho_1 \left[\begin{array}{c} \frac{\bar{a}_1}{\mu_i - \lambda_1}\\\vdots\\\frac{\bar{a}_n}{\mu_i - \lambda_n}\end{array}\right] \left[\begin{array}{cc}\bar{a}_1 & \bar{a}_2\end{array}\right] \left[\begin{array}{c}c_{i1}\\c_{i2}
\end{array}\right]\notag\\
c_i & = -\rho_1 \left(\bar{a}_1c_{i1}\ + \ \bar{a}_2c_{i2}\right) \left[\begin{array}{c} \frac{\bar{a}_1}{\lambda_1 - \mu_i}\\\vdots\\\frac{\bar{a}_n}{\lambda_n - \mu_i}\end{array}\right].\\
\end{align*}
Thus, denoting $\hat{\alpha}$ as $-\rho_1 \left(\bar{a}_1c_{i1}\ + \ \bar{a}_2c_{i2}\right)$ we get,

\begin{equation}
c_i = \hat{\alpha}\left[\begin{array}{ccc} \frac{\bar{a}_1}{\lambda_1 - \mu_i}\quad \ldots \quad \frac{\bar{a}_n}{\lambda_n - \mu_i}\end{array}\right]^\top .\label{eqn:ciSolnvector}
\end{equation}
 
Placing (\ref{eqn:ciSolnvector}) in (\ref{eqn:civector}) and we have,
\begin{align}
\hat{\alpha}\left[\begin{array}{c} \frac{\bar{a}_1}{\lambda_1 - \mu_i}\\ \vdots \\ \frac{\bar{a}_n}{\lambda_n - \mu_i}\end{array}\right] & = \hat{\alpha}\rho_1 \left[\begin{array}{c} \frac{\bar{a}_1}{\mu_i - \lambda_1}\\\vdots\\\frac{\bar{a}_n}{\mu_i - \lambda_n}\end{array}\right] \sum\limits_{k = 1}^{n} \frac{\bar{a}_k^2}{\lambda_k - \mu_i}.\notag
\end{align}
Therefore for each element of $c_i$,
\begin{align}
\hat{\alpha} \frac{\bar{a}_j}{\lambda_j - \mu_i} & = \hat{\alpha} \rho_1 \ \frac{\bar{a}_j}{\mu_i - \lambda_j}\ \sum\limits_{k = 1}^{n} \frac{\bar{a}_k^2}{\lambda_k - \mu_i}\notag\\
0 &= \hat{\alpha} \underbrace{\left(1 + \rho_1 \sum\limits_{k = 1}^{n} \frac{\bar{a}_k^2}{\lambda_k - \mu_i}\right)}_{=0}\label{eqn:indepndntAlpha}
\end{align}
Equation inside the bracket of (\ref{eqn:indepndntAlpha}) is the characteristic equation (\ref{eqn:GolubEigenvalUpdate}) used for finding updated eigenvalues $\mu_i$ of $\tilde{D}$ by using eigenvalues of $D$ i.e. $\lambda$. As $\mu_i$ are the zeros of this equation. Placing $\lambda_k$ and $\mu_i$ will make the term in the bracket (\ref{eqn:indepndntAlpha}) zero. Due to independence of the choice of scalar $\hat{\alpha}$, any value of $\hat{\alpha}$ can be used to scale the matrix $\bar{C}$. In order to make the final matrix orthogonal, each column of $\bar{C}$ is scaled by inverse of Euclidean norm of the respective column (\ref{eqn:CTildeFinalForm1}).

From Eq. (\ref{eqn:ciSolnvector}) matrix notation for $\tilde{C}$ can be written as

\begin{equation}
\tilde{C} = \underbrace{\left[
\begin{array}{ccc}
\bar{a}_1 && \\
&\ddots&\\
& & \bar{a}_n
\end{array} \right]
\overbrace{\left[\begin{array}{ccc} \frac{1}{\lambda_1 - \mu_1} \quad \cdots \quad \frac{1}{\lambda_1-\mu_n}\\ \vdots \hspace{0.5in}  \hspace{0.4in} \vdots\\ \frac{1}{\lambda_n - \mu_1}\quad \cdots \quad \frac{1}{\lambda_n - \mu_n} \end{array}\right]}^{C}}_{\bar{C} = \left[\begin{array}{ccc}c_1 &\ldots & c_n\end{array}\right]}
\left[
\begin{array}{ccc}
|c_1| && \\
&\ddots&\\
& & |c_n|
\end{array} \right]^{-1}. \label{eqn:CTildeFinalForm1}
\end{equation}
Where, $\abs{c_i}$ is the Euclidean norm of $c_i$.
From (\ref{eqn:CTildeFinalForm1}) it is evident that the $C$ matrix is similar to Cauchy matrix and $\tilde{C}$ is the scaled version of Cauchy matrix $C$. In order to update singular vectors we need to calculate matrix-matrix product as given in  (\ref{eqn:UpdateUbyUCtilde1}). From (\ref{eqn:CTildeFinalForm1}) and (\ref{eqn:UpdateUbyUCtilde1}) we get,
\begin{eqnarray}
U\tilde{C} & = & U\left[
\begin{array}{ccc}
\bar{a}_1 && \\
&\ddots&\\
& & \bar{a}_n
\end{array} \right]
\left[\begin{array}{ccc} \frac{1}{\lambda_1 - \mu_1} \quad \cdots \quad \frac{1}{\lambda_1-\mu_n}\\ \vdots \hspace{0.5in}  \hspace{0.4in} \vdots\\ \frac{1}{\lambda_n - \mu_1}\quad \cdots \quad \frac{1}{\lambda_n - \mu_n} \end{array}\right] 
\left[
\begin{array}{ccc}
|c_1| && \\
&\ddots&\\
& & |c_n|
\end{array} \right]^{-1}\notag
\end{eqnarray}

\begin{eqnarray}
U\tilde{C} & = & \left[\begin{array}{ccc}\hat{u}_1 &\ldots & \hat{u}_n\end{array}\right] C \left[
\begin{array}{ccc}
|c_1| && \\
&\ddots&\\
& & |c_n|
\end{array} \right]^{-1}.\label{eqn:U1CTrummers}\\
\text{\qquad Where,~} \hat{u}_i & = & u_i\bar{a}_i \ and \ i = 1, \ldots ,n.\notag\\
U\tilde{C} & = & U_1  C \left[
\begin{array}{ccc}
|c_1| && \\
&\ddots&\\
& & |c_n|
\end{array} \right]^{-1}\notag\\
U\tilde{C} & = & U_2 \left[
\begin{array}{ccc}
|c_1| && \\
&\ddots&\\
& & |c_n|
\end{array} \right]^{-1}\label{eqn:UCTildeProduct}\\
\text{\qquad Where,~} U_1&  = & \left[\begin{array}{ccc}\hat{u}_1 &\ldots & \hat{u}_n\end{array}\right]\text{\ and~} U_2 = U_1 C \notag.
\end{eqnarray} 

	\subsubsection{Trummer's Problem: Cauchy Matrix-Vector Product}\label{sec:trummers}
	
	In (\ref{eqn:U1CTrummers}) there are $n$ vectors in $U$ which are multiplied with Cauchy matrix $C$. The problem of multiplying a Cauchy matrix with a vector is called Trummer's problem. As there are $n$ vectors in (\ref{eqn:U1CTrummers}), matrix-matrix product in it can be represented as $n$ matrix-vector products, i.e., it is same as solving Trummer's problem $n$ times. Section \ref{sec:FASTOriginal} describes an algorithm given in \cite{gerasoulis88} which efficiently computes such matrix-vector product in $O(n \ \text{log}^2(n))$ time.

	\section{FAST: Method based on Polynomial Interpolation and FFT}\label{sec:FASTOriginal} 
	Consider the the matrix - (Cauchy) matrix product $U_2 = U_1C$ of Section \ref{ssec:UpdateSingularVectors}. This product can be written as

	\begin{eqnarray}
	U_2 & = & \left[\begin{array}{ccc}\hat{u}_1 &\ldots & \hat{u}_n\end{array}\right] C\\
	U_2 & = & \left[\begin{array}{ccc} u_{11} \quad \cdots \quad u_{1n}\\ \vdots \hspace{0.3in} \vdots \hspace{0.3in} \vdots \\ u_{n1} \quad \cdots \quad u_{nn}\end{array}\right]\left[\begin{array}{ccc} \frac{1}{\lambda_1 - \mu_1} \quad \cdots \quad \frac{1}{\lambda_1-\mu_n}\\ \vdots \hspace{0.4in} \vdots \hspace{0.4in} \vdots\\ \frac{1}{\lambda_n - \mu_1}\quad \cdots \quad \frac{1}{\lambda_n - \mu_n} \end{array}\right].
	\end{eqnarray}
	The dot product of each row vector of $U_1$ and the coulmn vector of the Cauchy matrix can be represented in terms of a function of eigenvalues $\mu_i$,  $i = 1,\ldots,n$,
	
	\begin{eqnarray} 
	f(\mu_1) & = & \left[\begin{array}{ccc} u_{11} \quad \cdots \quad u_{1n}\end{array}\right]\left[\begin{array}{c}\frac{1}{\lambda_1 - \mu_1}\\\vdots\\\frac{1}{\lambda_n - \mu_1}\end{array}\right]\\
	& = & \frac{u_{11}}{\lambda_1 -\mu_1}+\ldots+\frac{u_{1n}}{\lambda_n -\mu_1}\notag\\ 
	f(\mu_1)& = & \sum_{j=1}^{n} \frac{u_{1j}}{\lambda_j - \mu_1}\notag.\\
	\text{Hence in general,}&&\notag\\
	f(x) & = & \sum_{j=1}^{n} \frac{u_j}{\lambda_j - x}.	\label{eqn:functionf(x)}
	\end{eqnarray}
	Equation (\ref{eqn:functionf(x)}) can be shown as the ratio of two polynomials 
	\begin{eqnarray} 
		f(x) & = & \frac{h(x)}{g(x)},\label{eqn:ratioh/g}
		\text{\qquad where, }  g(x) = \prod\limits_{j=1}^{n} (\lambda_j - x).\label{eqn:Defineg(x)}\\
		h(x) &=& g(x)\sum_{j=1}^{n} \frac{u_j}{\lambda_j - x}\\
		h(x) &=& \prod\limits_{j=1}^{n} (\lambda_j - x)\sum_{j=1}^{n} \frac{u_j}{\lambda_j - x}
		\end{eqnarray}

		The FAST algorithm  \cite{gerasoulis88} represents the a matrix-vector product as (\ref{eqn:functionf(x)}) and (\ref{eqn:ratioh/g}). It then finds solutions to this problem by the use of interpolation.		
	The FAST algorithm of Gerasoulis \cite{gerasoulis88} has been reproduce in \ref{app:fastAlgo}.

	\section{Matrix-vector product using FMM} \label{sec:IFASTAlgo}
	The FAST algorithm  in \cite{gerasoulis88} has complexity $O (n \ \text{log}^2n)$. It computes the function $f(x)$ using FFT and interpolation. We observe that these two methods are two major procedures that contributes to the overall complexity of FAST algorithm. To reduce this complexity we present an algorithm that uses FMM for finding Cauchy matrix-vector product that updates the SVD of rank-1 perturbed matrix with time complexity  $O(n^2 \ \text{log}(\frac{1}{\epsilon}))$.
	
	Recall from Section \ref{ssec:UpdateSingularVectors}, update of singular vectors require matrix-(Cauchy) matrix product $U_2 = U_1C$, This product is represented as the function below.
	$$ f(\mu_i)  =  \sum_{j=1}^{n} \frac{-u_j}{\mu_i - \lambda_j}.$$
	We use FMM to compute this function. 
	

	\subsection{Fast Multipole Method (FMM)}\label{ssec:FMM}
	An algorithm for computing potential and force of $n$ particles in a system is given in Greengard and Rokhlin \cite{gr87}. This algorithm enables fast computation of forces in an $n$-body problem by computing interactions among points in terms of far-field and local expansions. It starts with clustering particles in groups such that the inter-cluster and intra-cluster distance between points make them well-separated. Forces between the clusters are computed using Multipole expansion.

	Dutt et al. \cite{dgr96} describes the idea of FMM for particles in one dimension and presents an algorithm for evaluating sums of the form, 
						
	\begin{equation}
	f(x) = \sum_{k=1}^{N}\alpha_k \cdot \phi(x - x_k).\label{eqn:f(x)Define}
	\end{equation}
						
	Where, $\{\alpha_1,\ldots,\alpha_N\}$ is a set of complex numbers and $\phi(x)$ can be a function that is singular at $x=0$ and smooth every where. Based on the choice of function, (\ref{eqn:f(x)Define}) can be used to evaluate sum of different forms for example:
										
	\begin{equation}\label{eqn:f(x)_SVDU}
	f(x) = \sum_{k=1}^{N}\frac{\alpha_k}{x-x_k}
	\end{equation}
	where, $\phi(x) = \frac{1}{x}$.
	Dutt et al. \cite{dgr96} presents an algorithm for computing  summation (\ref{eqn:f(x)_SVDU}) using FMM that runs in $O(n \ \text{log}(\frac{1}{\epsilon}))$. 
			

\subsection{Summation using FMM}\label{ssec:SumFMM}
			
Consider \textit{well separated} points $\{x_1,x_2,\ldots, x_N\}$ and $\{y_1,\ldots,y_M\}$ in $\mathbb{R}$ such that for points $x_0,y_0 \in \mathbb{R}$ and $r>0,\ r \in \mathbb{R}$.
\begin{eqnarray*}
|x_i - x_0| & < & r \ \forall i = 1,\ldots,N\\
|y_i - y_0| & < & r \ \forall i = 1,\ldots,M \text{\ and}\\
|x_0 - y_0| & > & 4r\\
\end{eqnarray*}
For a function defined as $f:\mathbb{R} \rightarrow \mathbb{C}$ such that
				\begin{equation}
					f(x) = \sum_{k=1}^{N}\frac{\alpha_k}{x-x_k}\label{eqn:f(x)asSum}
					\end{equation}
where, $\{\alpha_1 , \ldots, \alpha_k\}$ is a set of complex numbers. Given $f(x)$, the task is to find $f(y_1),\ldots,f(y_M)$.

\section{Rank-One SVD update}\label{sec:rank1SVDUpdate}	
In this section, we present Algorithm \ref{algo:Rank-1 SVD update} that uses FMM and updates Singular Value Decomposition in $O(n^2 \ \text{log}(\frac{1}{\epsilon}))$ time.

\subsection{\textbf{Algorithm:} Update SVD of rank-1 modified matrix using FMM}\label{algo:Rank-1 SVD update}
\begin{enumerate}
\item[INPUT] $A \in \mathbb{R}^{m \times n} = U\Sigma V^T,\ a\in \mathbb{R}^{m} \text{and}\ b\in \mathbb{R}^{n}$

\item[OUTPUT] $U_n,\ \Sigma_n,\ \text{and}\ V_n$

\item[STEP 1] Compute $\tilde{b} = U\Sigma V^Tb,\ \tilde{a} = V\Sigma^TU^Ta,\ \beta = b^Tb,\ \alpha = a^Ta,\ D_u = \Sigma\Sigma^T$ and $D_v = \Sigma^T\Sigma$.

\item[STEP 2] Compute the Schur decomposition of $\left[\begin{array}{cc}\beta \quad 1 \\ 1 \quad 0\end{array}\right] = Q_u \left[\begin{array}{cc}\rho_1 \quad 0 \\ 0 \quad \rho_2\end{array}\right]Q_u^T$ and $\left[\begin{array}{cc}a \ \tilde{b}\end{array}\right]Q_u = \left[\begin{array}{cc}a_1 \ b_1 \end{array}\right]$.
\item[STEP 3] Compute the Schur decomposition of $\left[\begin{array}{cc}\alpha \quad 1 \\ 1 \quad 0\end{array}\right] = Q_v \left[\begin{array}{cc}\rho_1 \quad 0 \\ 0 \quad \rho_2\end{array}\right]Q_v^T$ and $\left[\begin{array}{cc}b \ \tilde{a}\end{array}\right]Q_v = \left[\begin{array}{cc}a_2 \ b_2 \end{array}\right]$.

\item[STEP 4] Compute updated left singular vector $\tilde{U}\ \text{and} \ \tilde{D}$ by calling the procedure RankOneUpdate$(U,a_1,D,\rho_1)$ - Algorithm \ref{algo:RankOneUpdate}.

\item[STEP 5] Compute left singular vector of $\hat{A}$, i.e., $U_n \ \text{and} \ D_n$ by calling the procedure RankOneUpdate$(\tilde{U},b_1\tilde{D},\rho_2)$.

\item[STEP 6] Compute updated right singular vector $\tilde{V}  \ \text{and} \ \tilde{D_v}$ by calling the procedure RankOneUpdate$(V,a_2,D_v,\rho_3)$.

\item[STEP 7] Compute right singular vector of $\hat{A}$ i.e. $V_n  \ \text{and} \ D_{vn}$ by calling the procedure RankOneUpdate$(\tilde{V},b_2,\tilde{D_v},\rho_4)$.

\item[STEP 8] Find singular values by computing square root of the updated eigenvalues $\Sigma_n = \sqrt{D_n}$.
\end{enumerate}
Note that the Schur decomposition of Steps 2 and 3 are computed over constant size matrices and thus the decomposition will take constant time.
\subsection{\textbf{Algorithm:} RankOneUpdate}
\label{algo:RankOneUpdate}
\begin{enumerate}
\item[INPUT] $U,a_1,D,\rho_1$
\item[OUTPUT] $\tilde{U},\ \tilde{D}$

\item[STEP 1] Compute $\bar{a} = U^T a_1$. 



\item[STEP 2] Compute $\mu$ as zeros of the equation $\ w(\mu) = 1 + \rho_1 \sum\limits_{i=1}^{n} \frac{\bar{a}_i^2}{\lambda_i - \mu}$.

\item[STEP 3] Compute $C = \left[\begin{array}{ccc} \frac{1}{\lambda_1 - \mu_1} \quad \cdots \quad \frac{1}{\lambda_1-\mu_n}\\ \vdots \hspace{0.5in}  \hspace{0.4in} \vdots\\ \frac{1}{\lambda_n - \mu_1}\quad \cdots \quad \frac{1}{\lambda_n - \mu_n} \end{array}\right]$.

\item[STEP 4] Compute $U_1 = U \scriptstyle\left[\begin{array}{ccc}\bar{a}_1 && \\&\ddots&\\ & & \bar{a}_n\end{array} \right]$

\item[STEP 5]Compute $\bar{C} = \scriptstyle\left[\begin{array}{ccc}\bar{a}_1 && \\&\ddots&\\ & & \bar{a}_n\end{array} \right]~\scriptstyle\left[\begin{array}{ccc} \frac{1}{\lambda_1 - \mu_1} \quad \cdots \quad \frac{1}{\lambda_1-\mu_n}\\ \vdots \hspace{0.5in}  \hspace{0.4in} \vdots\\ \frac{1}{\lambda_n - \mu_1}\quad \cdots \quad \frac{1}{\lambda_n - \mu_n} \end{array}\right]$

\item[STEP 6] Compute $U_2= U_1C$ as $n$ matrix-vector product. Where each row-column dot product is represented as a function 
$$ f(\mu_i)  =  \sum_{j=1}^{n} \frac{-u_j}{\mu_i - \lambda_j }.$$ for each $\mu_i$.

$U_2 = \textsc{FMM}(\lambda,\ \mu,\ -U_1)$ - (\ref{app:FMMAlgortihm}).

\item[STEP 7] Form $\tilde{U}$ by dividing each column of $U_2$ by norm of respective column of $\bar{C}$.
\end{enumerate}

\subsection{Complexity for Rank-One SVD update}
	\newtheorem{theorem}{Theorem}
			\begin{theorem}
			Given a matrix $A \in \mathbb{R}^{m \times n}$ such that $m \leq n$ and precision of computation parameter $\epsilon>0$, the complexity of computing SVD of a rank-1 update of $A$ with Algorithm \ref{algo:Rank-1 SVD update} is $O(n^2 \ \text{log} (\frac{1}{\epsilon}))$.
			\end{theorem}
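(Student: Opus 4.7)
The plan is to derive the overall complexity by bounding the cost of each step in Algorithm \ref{algo:Rank-1 SVD update} separately and showing that the dominant contribution comes from the Cauchy matrix-vector products performed inside RankOneUpdate. I would start by observing that Step 1 of Algorithm \ref{algo:Rank-1 SVD update} involves matrix-vector products of the form $U\Sigma V^\top b$ (and the analogous quantity with $a$), each of which can be performed in $O(n^2)$ time because $U$, $V$ are $n\times n$ (up to the $m\leq n$ assumption) and $\Sigma$ is diagonal; the inner products $\beta=b^\top b$ and $\alpha=a^\top a$ are $O(n)$, and forming $D_u=\Sigma\Sigma^\top$, $D_v=\Sigma^\top\Sigma$ is $O(n)$. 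Steps 2 and 3 perform Schur decompositions of fixed $2\times 2$ matrices and the accompanying multiplications of $n\times 2$ by $2\times 2$ matrices, so they cost $O(n)$, and Step 8 takes square roots of $n$ diagonal entries in $O(n)$.

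Next I would analyze RankOneUpdate (Algorithm \ref{algo:RankOneUpdate}) step by step. Step 1 is a single matrix-vector product, $O(n^2)$. Step 2 computes the $n$ zeros of the secular equation $w(\mu)=1+\rho_1\sum_i \bar{a}_i^2/(\lambda_i-\mu)$; each root is isolated in a known interval between consecutive $\lambda_i$'s, and standard root-finding (e.g.\ the methods discussed after Eq.~(\ref{eqn:GolubEigenvalUpdate}) together with deflation) converges in $O(1)$ iterations per root to precision $\epsilon$, each iteration evaluating a sum of $n$ terms, so the total is $O(n^2)$ (absorbing a constant number of iterations into the big-$O$). Steps 3, 4, 5, and 7 are simple $O(n^2)$ manipulations: assembling the Cauchy matrix $C$, scaling by diagonal matrices, and normalizing columns. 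The cost of Algorithm \ref{algo:RankOneUpdate} is therefore controlled by Step 6.

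The core of the argument is Step 6 of RankOneUpdate. Viewing $U_2=U_1 C$ as $n$ independent Cauchy matrix-vector products, each product corresponds to evaluating a sum of the form $f(\mu_i)=\sum_{j=1}^n -u_j/(\mu_i-\lambda_j)$ at the $n$ points $\mu_1,\ldots,\mu_n$. By the one-dimensional FMM of Dutt et al.\ described in Section \ref{ssec:FMM}, one such batch of $n$ evaluations can be performed in $O(n\log(1/\epsilon))$ time to precision $\epsilon$, provided the node sets $\{\lambda_j\}$ and $\{\mu_i\}$ satisfy the interlacing/separation hypothesis of Section \ref{ssec:SumFMM}. Since the updated eigenvalues $\mu_i$ strictly interlace the original $\lambda_j$, this hypothesis is met (after, if necessary, the standard hierarchical decomposition used in FMM). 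Doing this for all $n$ rows of $U_1$ gives a total of $O(n^2\log(1/\epsilon))$ for Step 6.

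Finally, combining the bounds, a single call to RankOneUpdate costs $O(n^2\log(1/\epsilon))$. Algorithm \ref{algo:Rank-1 SVD update} invokes RankOneUpdate exactly four times (Steps 4--7) and otherwise performs only $O(n^2)$ work, so the total running time is $4\cdot O(n^2\log(1/\epsilon))+O(n^2)=O(n^2\log(1/\epsilon))$, proving the theorem. The main obstacle is Step 6: one has to be careful that the FMM hypotheses are genuinely satisfied by the interlacing pairs $(\lambda_j,\mu_i)$ encountered here, and that the precision parameter $\epsilon$ carried through the recursion in Dutt et al.\ is the same $\epsilon$ that appears in the theorem statement. Once that is justified, the rest is a straightforward accounting of $O(n^2)$ and $O(n)$ subroutines.
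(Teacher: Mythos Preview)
Your proposal is correct and follows essentially the same approach as the paper: the paper's proof is a one-line deferral to Table~\ref{table:SVDusingModifiedFast}, which records exactly the same three ingredients you spell out---$O(n^2)$ for the preliminary matrix--vector products, $O(n^2)$ for the secular-equation roots, and $O(n\log(1/\epsilon))$ per Cauchy matrix--vector product via FMM, repeated $n$ times---summed to $O(n^2\log(1/\epsilon))$. Your concern about the well-separatedness hypothesis is unnecessary: in the one-dimensional FMM of \cite{dgr96} (reproduced in \ref{app:FMMAlgortihm}) the separation is achieved automatically by the hierarchical subdivision of the interval, not imposed as a precondition on the inputs $\{\lambda_j\}$ and $\{\mu_i\}$, so no interlacing argument is needed.
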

			
			\begin{proof}
			Follows from Algorithm \ref{algo:Rank-1 SVD update} and Table \ref{table:SVDusingModifiedFast}.
			\end{proof}
				\begin{table}[h]
			\centering 
				\begin{tabular}{|c|l|l|}
					\hline\rule[-2mm]{0mm}{8mm} Section & Complexity & Operation \\ 
					\hline\rule{0pt}{4ex}\ref{sec:Rank1SVD} & $O(n^2)$ & Compute $\hat{U}\hat{\Sigma}\hat{\Sigma}^\top \hat{U}^\top $ and $\hat{V}\hat{\Sigma}^\top \hat{\Sigma}\hat{V}^\top $\\ 
\hline\rule{0pt}{4ex}\ref{ssec:UpdateSingularvalues} & $O(n^2)$ & Update singular values solving \\
					&& 	$w(t) = 1 + \rho \sum\limits_{j=1}^{n} \frac{z_j^2}{\lambda_j - t} = 0$\\ 
\hline\rule{0pt}{4ex}\ref{ssec:FMM} & $O(n \ \text{log}(\frac{1}{\epsilon}))$ & Update singular vectors as $\tilde{U} = U \tilde{C}$ \\
					&& using FMM\\
					\hline\rule{0pt}{4ex} &$O(n^2 \ \text{log}(\frac{1}{\epsilon}))$& Total Complexity\\
					\hline
				\end{tabular}
				\caption{Rank-1 SVD update Complexity}
				\label{table:SVDusingModifiedFast}				
			\end{table}

\section{Numerical Results}\label{sec:expResults}
All the computations for the algorithm were performed on MATLAB over a machine with Intel i5, quad-core, 1.7 GHz, 64-bit processor with 8 GB RAM. Matrices used in these experiments are square and generated randomly with values ranging from $[1,9]$. The sample size varies from $2 \times 2$ to $35 \times 35$. 
For computing the matrix vector product we use FMM Algorithm (instead of FAST Algorithm) with machine precision $\epsilon$  =$5^{-10}$. 
%

In order to update singular vectors we need to perform two rank-1 updates (\ref{eqn:U+D+U+T}). For each such rank-1 update we use FMM. 

\begin{align}
\hat{U}\hat{D}\hat{U}^T & =  \underbrace{UDU^T + \rho_1a_1a_1^T}_{\tilde{U}\tilde{D}\tilde{U}^T} + \rho_2b_1b_1^T.\label{eqn:U+D+U+T}
\end{align}
Figure \ref{fig:TimeFmmStngHalf_35} shows the time taken by FAST Algorithm and FMM Algorithm to compute the first rank-1 update. 

\begin{figure}[H]    
\begin{minipage}[t]{0.45\textwidth}
\includegraphics[width=\linewidth]{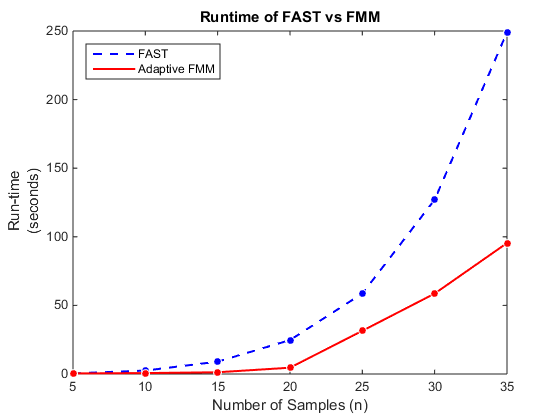}
\caption{Run-time for rank-1 update (\ref{eqn:U+D+U+T})}
\label{fig:TimeFmmStngHalf_35}
\end{minipage}
\hspace{\fill}
\begin{minipage}[t]{0.45\textwidth}
\includegraphics[width=\linewidth]{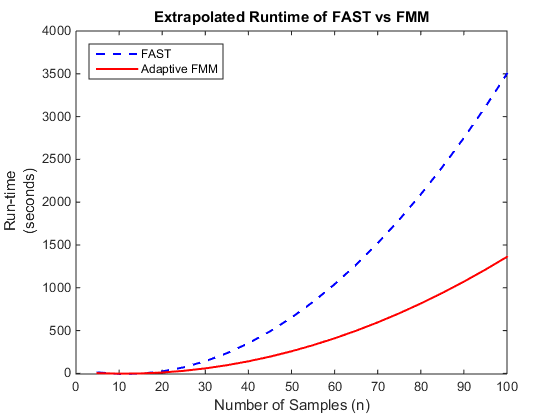}
\caption{Extrapolated run-time for rank-1 update (\ref{eqn:U+D+U+T})}
\label{fig:ExtrTimeFmmStngHalf_100}
\end{minipage}
\end{figure}

\subsection{Choice of $\epsilon$}
In earlier computations we fixed the machine precision parameter $\epsilon$ for FMM based on the order of the Chebyshev polynomials, i.e., $\epsilon = 5^{-p}$, where $p$ is the order of Chebyshev polynomial. As we increase the order of polynomials - we expect reduction in the error and increase in updated vectors accuracy. This increase in accuracy comes with the cost of higher computational time. To show this we fix the input matrix dimension to $25 \times 25$ and generate values of these matrices randomly in the range [0,1]. Figure \ref{fig:ErrVsP_25} shows the error between updated singular vectors generated by Algorithm \ref{algo:Rank-1 SVD update} and exact computation of singular vector updates. 
The results in figure justifies our choice of fixed machine precision $p = 20$.

\begin{figure}[H]
	\begin{center} 
	\includegraphics[scale = 0.70]{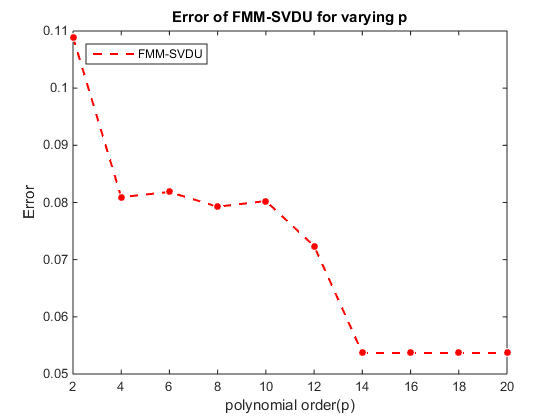}
	\caption{Error with varying $p$}
	\label{fig:ErrVsP_25}
	\end{center}
\end{figure}

Error is computed using the equation (\ref{eqn:error}) \cite{stange08}, where $\hat{A}$ is the perturbed matrix, $\hat{U}\hat{\Sigma}\hat{V}^T$ is the approximation computed using FMM-SVDU and max $\hat{\sigma}$ is the directly computed maximum eigenvalue of $\hat{A}$. 

\begin{align}
\text{Error} & = \text{max}\left\lvert\frac{\hat{A} - \hat{U}\hat{\Sigma}\hat{V}^T}{\text{max}\ \hat{\sigma}}\right\rvert \label{eqn:error}
\end{align}

Table \ref{table:Fmm-svduAccurcy} shows the accuracy of rank-1 SVD update using FMM- SVDU for varying sample size. Figure \ref{fig:Fmm-svduErr} shows plot of the accuracy of FMM-SVDU with varying sample size. 

\begin{table}[H]
	\centering
		\caption{rank-1 SVD update accuracy}
		\label{table:Fmm-svduAccurcy}
		\begin{tabular}{|c|c|c|}
		\hline\rule[-2mm]{0mm}{8mm} No. & Dimension & Error\\ 
		\rule[-2mm]{0mm}{8mm} &  $n \times n$ & (Eqn.  \ref{eqn:error}) \\ 
		\hline\rule{0pt}{5ex}1. & $10 \times 10$ & 0.141245710607176\\ 
		\hline\rule{0pt}{5ex}2. & $20 \times 20$ & 0.0837837759946002\\
		\hline\rule{0pt}{5ex}3. & $30 \times 30$ & 0.0559656608985486\\
		\hline\rule{0pt}{5ex}4. & $40 \times 40$ & 0.0623799282154490\\
		\hline\rule{0pt}{5ex}5. & $50 \times 50$ & 0.0464500903310721\\
		\hline 
		\end{tabular}
	\end{table}
		
\begin{figure}[H]
	\begin{center} 
	\includegraphics[scale =0.70]{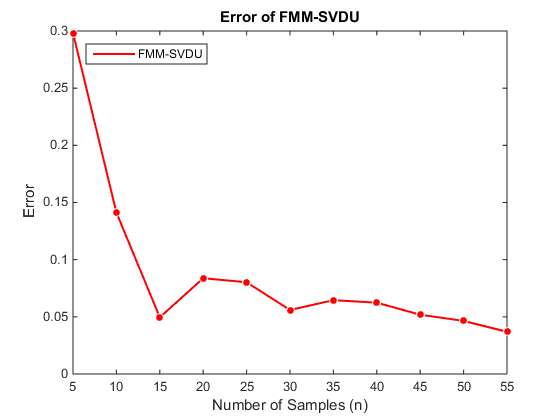}
	\caption{Accuracy of FMM-SVDU}
	\label{fig:Fmm-svduErr}
	\end{center}
\end{figure}


		\section{Conclusion}
		In this paper we considered the problem of updating Singular Value Decomposition of a rank-1 perturbed matrix. We presented an efficient algorithm for updating SVD of rank-1 perturbed matrix that uses the Fast Multipole method for improving Cauchy matrix- vector product computational efficiency. 
An interesting and natural extension of this work is to consider updates of rank-k.\\
				
%
\section*{References}



\appendix
\section{Matrix Factorization}\label{app:MatFact}
We consider the matrix factorization in \cite{stange08}. Let SVD of a $m \times n$ matrix $A = U\Sigma V^\top $. Where, $U \in \mathbb{R}^{m\times m}$, $\Sigma \in \mathbb{R}^{m\times n}$ and $V \in \mathbb{R}^{n\times n}$. Here it is assumed that $m \leq n$. Let there be a rank-one update $ab^\top $ to matrix $A$ Eq. (\ref{eqn:APurturbation}) and let $\hat{U}\hat{\Sigma}\hat{V}^\top $ denote the new(updated) SVD, where $a \in \mathbb{R}^{m}$, $b \in \mathbb{R}^{n}$.
	
	\begin{eqnarray}
	\hat{A} & = & A + ab^\top  \label{eqn:APurturbation}\\
	\hat{A}& = & \hat{U} \hat{\Sigma} \hat{V}^\top 
	\end{eqnarray}
	
	In order to find SVD of this updated matrix $\hat{A}$ we need to compute $\hat{A}\hat{A}^\top $ and $\hat{A}^\top \hat{A}$ because left singular vector $\hat{U}$ of $\hat{A}$ is orthonormal eigenvector of $\hat{A}\hat{A}^\top $ and right singular vector $\hat{V}$ of $\hat{A}$ is orthonormal eigenvector of $\hat{A}^\top \hat{A}$.
	
	\begin{eqnarray}
	\hat{A}\hat{A}^\top  & = & (\hat{U}\hat{\Sigma}\hat{V}^\top )(\hat{U}\hat{\Sigma}\hat{V}^\top )^\top  \\
	& = & \hat{U}\hat{\Sigma}\hat{V}^\top  \hat{V}\hat{\Sigma}^\top \hat{U}^\top \notag\\
	\hat{A}\hat{A}^\top & = & \hat{U}\hat{\Sigma}\hat{\Sigma}^\top \hat{U}^\top  \text{\qquad Where,~} \hat{V}^\top \hat{V} = I\label{eqn:AAT}
	\end{eqnarray}
	
	\begin{eqnarray}
	\hat{A}\hat{A}^\top  & = & (U \Sigma V^\top  +ab^\top )(U \Sigma V^\top  + ab^\top )^\top  \notag\\
	& = & (U \Sigma V^\top  +ab^\top )(V \Sigma^\top  U^\top  + ba^\top )\notag\\
	& = & U \Sigma V^\top V \Sigma^\top  U^\top  + \underbrace{U \Sigma V^\top b}_{\tilde{b}}a^\top  + a\underbrace{b^\top V \Sigma^\top U^\top }_{\tilde{b}^\top } + a\underbrace{b^\top b}_{\beta}a^\top \notag\\
	\hat{A}\hat{A}^\top  & = & U\Sigma\Sigma^\top U^\top  + \tilde{b}a^\top  + a\tilde{b}^\top  + \beta aa^\top \text{\qquad Where,~} V^\top V = I\label{eqn:AAT3Rank1}
	\end{eqnarray}
	
		Where, $\tilde{b} = U \Sigma V^\top b $, \ $\tilde{b}^\top  = b^\top V \Sigma^\top U^\top $ and $\beta = b^\top b$.\\
	From Eq. (\ref{eqn:AAT3Rank1}) it is clear that the problem of rank-1 update Eq. (\ref{eqn:APurturbation}) is modified to problem of three rank-1 updates that is further converted to two rank-1 updates in Eq. (\ref{eqn:U+D+U+T}).
	Equating Eq. (\ref{eqn:AAT}) and Eq. (\ref{eqn:AAT3Rank1}) we get,

	\begin{eqnarray}
	\hat{U}\underbrace{\hat{\Sigma}\hat{\Sigma}^\top }_{\hat{D}}\hat{U}^\top  & = & U\underbrace{\Sigma\Sigma^\top }_{D}U^\top  + \tilde{b}a^\top  + a\tilde{b}^\top  + \beta aa^\top \notag\\
	\hat{U}\hat{D}\hat{U}^\top & = & UDU^\top + \tilde{b}a^\top  + a\tilde{b}^\top  + \beta aa^\top \notag\\
	& = & UDU^\top + \beta aa^\top  + \tilde{b}a^\top  + a\tilde{b}^\top \notag\\
	& = & UDU^\top + (\beta a + \tilde{b})a^\top  + a\tilde{b}^\top \notag\\
	& = & UDU^\top + \left[\begin{array}{cc}\beta a + \tilde{b} \quad a \end{array}\right]  \left[\begin{array}{c}a^\top \\ \tilde{b}^\top \end{array}\right]\notag
	\end{eqnarray}
	\begin{eqnarray}
		\hat{U}\hat{D}\hat{U}^\top  & = & UDU^\top  + \left[\begin{array}{cc} a \quad \tilde{b}\end{array}\right] \left[\begin{array}{cc}\beta \quad 1 \\ 1 \quad 0\end{array}\right] \left[\begin{array}{c} a^\top  \\ \tilde{b}^\top \end{array}\right]\notag
		\end{eqnarray}
	\begin{eqnarray}
		\hat{U}\hat{D}\hat{U}^\top  & = & UDU^\top  + \left[\begin{array}{cc} a \quad \tilde{b}\end{array}\right] Q \left[\begin{array}{cc}\rho_1 \quad 0 \\ 0 \quad \rho_2\end{array}\right]Q^\top  \left[\begin{array}{c} a^\top  \\ \tilde{b}^\top \end{array}\right]\notag
		\end{eqnarray}
	\begin{eqnarray}
	\text{Where,~} && \left[\begin{array}{cc}\beta \quad 1 \\ 1 \quad 0\end{array}\right]= Q \left[\begin{array}{cc}\rho_1 \quad 0 \\ 0 \quad \rho_2\end{array}\right]Q^\top \text{~(Schur-decomposition)}\notag
	\end{eqnarray}
	\begin{eqnarray}
	\hat{U}\hat{D}\hat{U}^\top  & = & UDU^\top  + \underbrace{\left[\begin{array}{cc} a \quad \tilde{b}\end{array}\right] Q}_{\left[\begin{array}{cc} a_1 \quad b_1\end{array}\right]} \left[\begin{array}{cc}\rho_1 \quad 0 \\ 0 \quad \rho_2\end{array}\right]\underbrace{Q^\top  \left[\begin{array}{c} a^\top  \\ \tilde{b}^\top \end{array}\right]}_{\left[\begin{array}{cc} a_1 \quad b_1\end{array}\right]^\top }\notag\\
	\hat{U}\hat{D}\hat{U}^\top  & = & UDU^\top  + \left[\begin{array}{cc} a_1 \quad b_1\end{array}\right] \left[\begin{array}{cc}\rho_1 \quad 0 \\ 0 \quad \rho_2\end{array}\right]\left[\begin{array}{c} a_1^\top  \\ b_1^\top \end{array}\right]^\top \notag\\
	\hat{U}\hat{D}\hat{U}^\top  & = & \underbrace{UDU^\top  + \rho_1a_1a_1^\top }_{\tilde{U}\tilde{D}\tilde{U}^\top } + \rho_2b_1b_1^\top . \label{eqn:U+D+U+T}
	\end{eqnarray}
		Similarly for computing right singular vectors do the following.
		\begin{eqnarray}
			\hat{A}^\top \hat{A} & = & (U \Sigma V^\top  +ab^\top )^\top (U \Sigma V^\top  + ab^\top ) \notag\\
			& = & (V \Sigma^\top  U^\top  + ba^\top )(U \Sigma V^\top  +ab^\top )\notag\\
			\hat{A}^\top \hat{A}& = & V \Sigma^\top  U^\top U \Sigma V^\top  + \underbrace{V \Sigma^\top  U^\top a}_{\tilde{a}}b^\top  + b\underbrace{a^\top U \Sigma V^\top }_{\tilde{a}^\top }+ b\underbrace{a^\top a}_{\alpha}b^\top \notag\\
			(\hat{V}\hat{\Sigma}^\top \hat{U}^\top )(\hat{U}\hat{\Sigma}\hat{V}^\top )& = & V\Sigma^\top \Sigma V^\top  + \tilde{a}b^\top  + b\tilde{a}^\top  + \alpha bb^\top \text{\qquad Where,~} U^\top U = I \notag.\\
			\hat{V}\hat{\Sigma}^\top \hat{\Sigma}\hat{V}^\top  & = &  V\Sigma^\top \Sigma V^\top  + \rho_3a_2a_2^\top  + \rho_4b_2b_2^\top  \label{eqn:V+D+V+T}
			\end{eqnarray}
			
			The following computation is to be done for each rank-1 update in Eq. (\ref{eqn:U+D+U+T}) and Eq. (\ref{eqn:V+D+V+T}) i.e. the below procedure will repeat four times, two times each for updating left and right singular vectors. From Eq. (\ref{eqn:U+D+U+T}) we have,
			\begin{eqnarray}
			\tilde{U}\tilde{D}\tilde{U}^\top  & = & UDU^\top  + \rho_1a_1a_1^\top  \label{eqn:UDUTzRankoneUpdate}\\
			& = & UDU^\top  + \rho_1(UU^\top a_1)(UU^\top a_1)^\top \notag\\
			& = & UDU^\top  + \rho_1(U\bar{a})(U\bar{a})^\top  \text{~Where,~} \bar{a} = U^\top a_1\notag\\
			& = & UDU^\top  + \rho_1(U\bar{a} \bar{a}^\top U^\top )\notag\\
			\tilde{U}\tilde{D}\tilde{U}^\top  & = & U\underbrace{(D + \rho_1\bar{a}\bar{a}^\top )}_{B}U^\top . \label{eqn:tildeUDU}
			\end{eqnarray}
			From Eq. (\ref{eqn:tildeUDU}) we have,
			\begin{eqnarray}
			B & := & D + \rho_1\bar{a}\bar{a}^\top \label{eqn:BDRank1}\\		
			B & = & \tilde{C}\tilde{D}\tilde{C}^\top  \text{~(Schur-decomposition).}\label{eqn:BTildeCDC}	
			\end{eqnarray}
			Placing Eq. (\ref{eqn:BTildeCDC}) in Eq. (\ref{eqn:tildeUDU}) we get,
			
			\begin{eqnarray}
			\tilde{U}\tilde{D}\tilde{U}^\top   =  \underbrace{U(\tilde{C}}_{\tilde{U}}\tilde{D}\underbrace{\tilde{C}^\top )U^\top }_{\tilde{U}^\top }.
			\end{eqnarray}
			After the rank-1 update to $UDU^\top $ Eq. (\ref{eqn:UDUTzRankoneUpdate}) the updated singular vector matrix is given by matrix-matrix product
			\begin{equation}
			\tilde{U} = U \tilde{C} \label{eqn:UpdateUbyUCtilde}.
			\end{equation}

\section{Solution to Sylvester Equation}\label{app:sylvestersolution}
In Section \ref{ssec:UpdateSingularVectors} we discussed method for updating singular vectors. For the same we derived solutions to (\ref{eqn:DC-CD}) using Sylvester equation. In this section we present details of how this solution can be obtained.\\

For simplicity consider the case where the dimension of all the matrices ($C$, $D$ and $\tilde{D}$) is $2 \times 2$.

\begin{align}
L.H.S & = (I_n \otimes D + (-\tilde{D})^\top  \otimes I_n)\vect \tilde{C}\notag\\
& = \left\{\left(\left[\begin{array}{cc}1 & 0\\ 0 & 1\end{array}\right] \otimes \left[\begin{array}{cc}\lambda_1 & 0\\ 0 & \lambda_2\end{array}\right]\right)  - \left(\left[\begin{array}{cc}\mu_1 & 0\\ 0 & \mu_2\end{array}\right]^\top \otimes \left[\begin{array}{cc}1 & 0\\ 0 & 1\end{array}\right]\right)\right\}\vect \left[\begin{array}{cc}c_{11} & c_{12}\\ c_{21} & c_{22}\end{array}\right] \notag \\
 & = \left\{\left[\begin{array}{cccc}\lambda_1 & 0 & 0 & 0\\ 0 & \lambda_2 & 0 & 0 \\ 0 & 0 & \lambda_1 & 0 \\ 0 & 0 & 0 & \lambda_2\end{array}\right] - \left[\begin{array}{cccc}\mu_1 & 0 & 0 & 0\\ 0 & \mu_1 & 0 & 0 \\ 0 & 0 & \mu_2 & 0 \\ 0 & 0 & 0 & \mu_2\end{array}\right]\right\} \left[\begin{array}{c}c_{11} \\ c_{21}\\ c_{12} \\ c_{22}\end{array}\right] \notag \\
 & = \left[\begin{array}{cccc}\lambda_1 - \mu_1 & 0 & 0 & 0\\ 0 & \lambda_2 - \mu_1 & 0 & 0 \\ 0 & 0 & \lambda_1 - \mu_2 & 0 \\ 0 & 0 & 0 & \lambda_2 - \mu_2\end{array}\right] \left[\begin{array}{c}c_{11} \\ c_{21}\\ c_{12} \\ c_{22}\end{array}\right]  \notag
 \end{align}
 
 \begin{align}
R.H.S & = \vect (- \rho_1\bar{a}\bar{a}^\top \tilde{C})\notag\\
& = -\rho_1\vect \left(  \left[\begin{array}{c}\bar{a}_1 \\ \bar{a}_2\\\end{array}\right]\left[\begin{array}{cc}\bar{a}_1 & \bar{a}_2\end{array}\right]\left[\begin{array}{cc}c_{11} & c_{12}\\ c_{21} & c_{22}\end{array}\right]\right)\notag \\
& = -\rho_1\vect \left(  \left[\begin{array}{c}\bar{a}_1 \\ \bar{a}_2\\\end{array}\right]\left[\begin{array}{cc}\bar{a}_1c_{11} + \bar{a}_2 c_{21} & \bar{a}_1c_{12} + \bar{a}_2 c_{22
}\end{array}\right]\right)\notag \\
& = -\rho_1\vect \left(  \left[\begin{array}{c}\bar{a}_1 \\ \bar{a}_2\\\end{array}\right]\left[\left[\begin{array}{cc}\bar{a}_1 & \bar{a}_2\end{array}\right]\left[\begin{array}{c}c_{11} \\ c_{21}\end{array}\right] \quad \left[\begin{array}{cc}\bar{a}_1 & \bar{a}_2\end{array}\right]\left[\begin{array}{c}c_{12} \\ c_{22}\end{array}\right]\right]\right)\notag \\
& = -\rho_1\vect \left(  \left[\begin{array}{c}\bar{a}_1 \\ \bar{a}_2\\\end{array}\right]\left[\begin{array}{cc}\bar{a}^\top c_1 & \bar{a}^\top c_2\end{array}\right]\right)\notag \\
& \text{Where, }  \tilde{C}  = \left[\begin{array}{cc}c_1 & c_2\end{array}\right] \text{and }  c_1 = \left[\begin{array}{c} c_{11} \\ c_{21}\end{array}\right] c_2 = \left[\begin{array}{c} c_{12} \\ c_{22}\end{array}\right]\notag\\
& = -\rho_1\vect \left(  \left[\begin{array}{cc}\bar{a}_1\bar{a}^\top c_1 & \bar{a}_1\bar{a}^\top c_2 \\ \bar{a}_2\bar{a}^\top c_1 & \bar{a}_2\bar{a}^\top c_2\end{array}\right]\right)\notag \\
& = -\rho_1\left[\begin{array}{c}\bar{a}_1\bar{a}^\top c_1 \\ \bar{a}_2\bar{a}^\top c_1 \\\bar{a}_1\bar{a}^\top c_2\\ \bar{a}_2\bar{a}^\top c_2\end{array}\right]\notag \\
\end{align}

Equating L.H.S and R.H.S we get,

\begin{align}
\left[\begin{array}{cccc}\lambda_1 - \mu_1 & 0 & 0 & 0\\ 0 & \lambda_2 - \mu_1 & 0 & 0 \\ 0 & 0 & \lambda_1 - \mu_2 & 0 \\ 0 & 0 & 0 & \lambda_2 - \mu_2\end{array}\right] \left[\begin{array}{c}c_{11} \\ c_{21}\\ c_{12} \\ c_{22}\end{array}\right] & = -\rho_1\left[\begin{array}{c}\bar{a}_1\bar{a}^\top c_1 \\ \bar{a}_2\bar{a}^\top c_1 \\\bar{a}_1\bar{a}^\top c_2\\ \bar{a}_2\bar{a}^\top c_2\end{array}\right].\notag
\end{align}
Therefore,
\begin{align}
\left[\begin{array}{c}c_{11} \\ c_{21}\\ c_{12} \\ c_{22}\end{array}\right] & = -\rho_1\left[\begin{array}{cccc}\lambda_1 - \mu_1 & 0 & 0 & 0\\ 0 & \lambda_2 - \mu_1 & 0 & 0 \\ 0 & 0 & \lambda_1 - \mu_2 & 0 \\ 0 & 0 & 0 & \lambda_2 - \mu_2\end{array}\right]^{-1}\left[\begin{array}{c}\bar{a}_1\bar{a}^\top c_1 \\ \bar{a}_2\bar{a}^\top c_1 \\\bar{a}_1\bar{a}^\top c_2\\ \bar{a}_2\bar{a}^\top c_2\end{array}\right]\notag\\
\left[\begin{array}{c}c_{11} \\ c_{21}\\ c_{12} \\ c_{22}\end{array}\right] & = -\rho_1\left[\begin{array}{cccc}\frac{1}{\lambda_1 - \mu_1} & 0 & 0 & 0\\ 0 & \frac{1}{\lambda_2 - \mu_1} & 0 & 0 \\ 0 & 0 & \frac{1}{\lambda_1 - \mu_2} & 0 \\ 0 & 0 & 0 & \frac{1}{\lambda_2 - \mu_2}\end{array}\right]\left[\begin{array}{c}\bar{a}_1\bar{a}^\top c_1 \\ \bar{a}_2\bar{a}^\top c_1 \\\bar{a}_1\bar{a}^\top c_2\\ \bar{a}_2\bar{a}^\top c_2\end{array}\right]\notag\\
\left[\begin{array}{c}c_{11} \\ c_{21}\\ c_{12} \\ c_{22}\end{array}\right] & = -\rho_1\left[\begin{array}{c}\frac{\bar{a}_1\bar{a}^\top c_1}{\lambda_1 - \mu_1} \\ \frac{\bar{a}_2\bar{a}^\top c_1}{\lambda_2 - \mu_1} \\ \frac{\bar{a}_1\bar{a}^\top c_2}{\lambda_1 - \mu_2}\\ \frac{\bar{a}_2\bar{a}^\top c_2}{\lambda_2 - \mu_2}\end{array}\right]\notag\\
\left[\begin{array}{c}c_{11} \\ c_{21}\\ c_{12} \\ c_{22}\end{array}\right] & = \rho_1\left[\begin{array}{c}\frac{\bar{a}_1\bar{a}^\top c_1}{\mu_1 - \lambda_1} \\ \frac{\bar{a}_2\bar{a}^\top c_1}{\mu_1 - \lambda_2} \\ \frac{\bar{a}_1\bar{a}^\top c_2}{\mu_2 - \lambda_1}\\ \frac{\bar{a}_2\bar{a}^\top c_2}{\mu_2 - \lambda_2}\end{array}\right]\notag\\
\text{Where, }\left[\begin{array}{c}c_{11} \\ c_{21}\end{array}\right] & = \rho_1\left[\begin{array}{c}\frac{\bar{a}_1\bar{a}^\top c_1}{\mu_1 - \lambda_1} \\ \frac{\bar{a}_2\bar{a}^\top c_1}{\mu_1 - \lambda_2}\end{array}\right] \quad \text{and } \left[\begin{array}{c}c_{12} \\ c_{22}\end{array}\right]  = \rho_1\left[\begin{array}{c}\frac{\bar{a}_1\bar{a}^\top c_2}{\mu_2 - \lambda_1}\\ \frac{\bar{a}_2\bar{a}^\top c_2}{\mu_2 - \lambda_2}\end{array}\right]\notag\\
\text{i.e. } c_1 & = \rho_1\left[\begin{array}{c}\frac{\bar{a}_1}{\mu_1 - \lambda_1} \\ \frac{\bar{a}_2}{\mu_1 - \lambda_2}\end{array}\right]\bar{a}^\top c_1 \quad \text{and } c_2 = \rho_1\left[\begin{array}{c}\frac{\bar{a}_1}{\mu_2 - \lambda_1}\\ \frac{\bar{a}_2}{\mu_2 - \lambda_2}\end{array}\right]\bar{a}^\top c_2 \notag\\
\text{In general }c_i & = \rho_1\left[\begin{array}{c}\frac{\bar{a}_1}{\mu_i - \lambda_1} \\ \vdots \\ \frac{\bar{a}_2}{\mu_i - \lambda_n}\end{array}\right]\bar{a}^\top c_1
\end{align}

By placing $c_1$ and $c_2$ in $\tilde{C} = \left[\begin{array}{cc} c_1 & c_2\end{array}\right]$ we get $\tilde{C}$ as below.

\begin{align}
\tilde{C} & = \left[\begin{array}{cc} c_1 & c_2\end{array}\right]\notag\\
\tilde{C} & = \left[\begin{array}{cc} \frac{\bar{a}_1\bar{a}^\top c_1}{\mu_1 - \lambda_1} & \frac{\bar{a}_1\bar{a}^\top c_2}{\mu_2 - \lambda_1}\\\frac{\bar{a}_1\bar{a}^\top c_1}{\mu_1 - \lambda_2} & \frac{\bar{a}_1\bar{a}^\top c_2}{\mu_2 - \lambda_2}\end{array}\right]\notag\\
\tilde{C} & = \left[\begin{array}{cc} \bar{a}_1 & 0 \\0 & \bar{a}_2\end{array}\right]\left[\begin{array}{cc} \frac{\bar{a}^\top c_1}{\mu_1 - \lambda_1} & \frac{\bar{a}^\top c_2}{\mu_2 - \lambda_1}\\\frac{\bar{a}^\top c_1}{\mu_1 - \lambda_2} & \frac{\bar{a}^\top c_2}{\mu_2 - \lambda_2}\end{array}\right]\notag\\
\tilde{C} & = \left[\begin{array}{cc} \bar{a}_1 & 0 \\0 & \bar{a}_2\end{array}\right]\left[\begin{array}{cc} \frac{1}{\mu_1 - \lambda_1} & \frac{1}{\mu_2 - \lambda_1}\\\frac{1}{\mu_1 - \lambda_2} & \frac{1}{\mu_2 - \lambda_2}\end{array}\right]\left[\begin{array}{cc} \bar{a}^\top c_1 & 0 \\0 & \bar{a}^\top c_2\end{array}\right]\notag
\end{align}

\section{FAST Algorithm}
	\label{app:fastAlgo}
In this section we present FAST Algorithm \cite{gerasoulis88} that computes functions of the form (\ref{eqn:functionf(x)}) using polynomial interpolation in time $O(n \log^2 n)$.
	\begin{enumerate}
	\item Compute the coefficients of $g(x)$ in its power form, by using FFT polynomial multiplication, in $O(n(\text{log}\ n)^2)$ time.\vspace{0.05in}\\
	
	\textbf{Decription:} Uses FFT for speedy multiplication which reduces complexity of multiplication to $O(n \ \text{log}(n))$ from $O(n^2)$.\\
	\textbf{Input:} Function $g(x)$ and eigenvalues of $\tilde{D}$ i.e. $[\mu_1,\ldots,\mu_n]$\\
	\textbf{Output:} Coefficients of $g(x)$ i.e. $[a_0,a_1,\ldots,a_n]$\\
	\textbf{Complexity:} $O(n \ \text{log}^2n)$
	
	\item Compute the coefficients of $g'(x)$ in $O(n)$ time.\vspace{0.05in}\\
	
	\textbf{Description:} Differentiate the function $g(x)$ and then compute its coefficients.\\
	\textbf{Input:} Function $g(x)$\\
	\textbf{Output:} Coefficients of $g'(x)$ i.e. $[b_0,b_1,\ldots,b_n]$\\
	\textbf{Complexity:} $O(n)$\\
	
	\item Evaluate $g(\lambda_i)$, $g'(\lambda_i)$ and $g(\mu_i)$.\vspace{0.05in}\\
	
	\textbf{Description:} For the functions $g(x)$ and $g'(x)$ evaluate their values at $\lambda_i$ and $\mu_i$\\
		\textbf{Input:} Function $g(x)$ and $g'(x)$, eigenvalues $[\lambda_1,\lambda_2,\ldots,\lambda_n]$ of $D$ and $[\mu_1,\mu_2,\ldots,\mu_n]$ of $\tilde{D}$\\
		\textbf{Output:} $g(\lambda_i)$, $g'(\lambda_i)$ and $g(\mu_i)$\\
		\textbf{Complexity:} $O(n \ \text{log}^2n)$\\
	
	\item Compute $h_j = u_jg'(\lambda_j)$.\vspace{0.05in}\\
	
	\textbf{Description:} For each eigenvalue $\lambda_j$ compute its function value $h(\lambda_j)$ i.e. find the points $(\lambda_j,u_jg'(\lambda_j))$\\
		\textbf{Input:} $[\lambda_1,\lambda_2,\ldots,\lambda_n]$\\
		\textbf{Output:} $h_j$\\
		\textbf{Complexity:} $O(n)$\\
	
	\item Find interpolation polynomial $h(x)$ for the points $(\lambda_j,h_j)$.\vspace{0.05in}\\
	
	\textbf{Description:} Given the function values and input i.e. points $(\lambda_j,h_j)$ find interpolation polynomial for $n$ points.\\
		\textbf{Input:} Points $(\lambda_j,h_j)$\\
		\textbf{Output:} $h(x)$ \\
		\textbf{Complexity:} $O(n \ \text{log}^2n)$\\
	
	\item Compute $v_i = \frac{h_(\mu_i)}{g(\mu_i)}$.\vspace{0.05in}\\
	
	\textbf{Description:} Compute the ratio $v_i = \frac{h_(\mu_i)}{g(\mu_i)}$ for each $\mu_i$ where, $v_i$ is value of $f(\mu_i)$ at each $\mu_i$.\\
		\textbf{Input:} Function $h(x)$ and $g(x)$ and $[\mu_1,\mu_2,\ldots,\mu_n]$\\
		\textbf{Output:}$f(\mu_i) = v_i$\\
		\textbf{Complexity:} $O(n)$\\
	
	\end{enumerate}


\section{Fast Multipole Method}\label{app:FMMAlgortihm}
\subsection{Interpolation and Chebyshev Nodes}
				%
Chebyshev nodes are the roots of the Chebyshev polynomials and they are used as points for interpolation. These nodes lie in the range $[-1,1]$. A polynomial of degree less than or equal to $n-1$ can fit over $n$ Chebyshev nodes. For Chebyshev nodes the approximating polynomial is computed using Lagrange interpolation. 

Expansions are used to quantify interactions among points. Expansions are only computed for points which are well-separated from each other.

\begin{table}[H]
\centering
\begin{tabular}{|c|c|l|}
\hline\rule[-2mm]{0mm}{8mm} No. & Expansion & Description \\
\hline\rule{0pt}{4ex}\multirow{2}{*}{1} & $\Phi_{l,i}$ & Far-field expansion is computed for points\\
&&  withing the cluster/sub-interval $i$ of level $l$.\\ 
\hline\rule{0pt}{4ex}\multirow{2}{*}{2} & $\Psi_{l,i}$ & Local expansion is computed for points which\\
& & are well seperated from sub-interval $i$ of level $l$\\
\hline
\end{tabular}
\caption{Expansions used in FMM }
\label{table:Expansion}
\end{table}

				\subsection{FMM Algorithm}
					\begin{enumerate}
						\item [STEP 1]\textbf{Description:} Decide the size of Chebyshev expansion i.e., $p = -\text{log}_5(\epsilon) = \text{log}_5 (\frac{1}{\epsilon})$ where $\epsilon > 1$ is the precision of computation or machine accuracy parameter.\\
						\textbf{Input:} $\epsilon$\\
						\textbf{Output:} $p$\\
						\textbf{Complexity:} $O(1)$\\
						
						\item [STEP 2]\textbf{Description:} Set $s$ as the number of points in the cell of finest level $(s \approx 2p)$ and level of finest division $nlevs = \text{log}_2(\frac{N}{s})$ where, $N$ is the number of points.\\		
						\textbf{Input:} $N$ and $p$\\
						\textbf{Output:} $s$ and $nlevs$\\
						\textbf{Complexity:} $O(1)$\\	
						
						\item [STEP 3]\textbf{Description:} Consider $p$ Chebyshev nodes defined over an interval $[-1,1]$ of the form below.\\						
						\begin{equation}
						t_i = cos \Big( \frac{2i - 1}{p} \cdot \frac{\pi}{2} \Big)\label{eqn:Define_ti}
						\end{equation}
						Where, $i = 1,\ldots,p$.\\

						\item [STEP 4]\textbf{Description:} Consider Chebyshev polynomials of the form 
						
\begin{equation}
u_j(t) = \prod\limits_{\substack{k=1 \\ k\neq j}}^{p} \frac{t - t_k}{t_j - t_k}.\label{eqn:Define_uj}
\end{equation} 
Where, $j = 1,\ldots,p$.

	\item [STEP 5] \textbf{Description:} Calculate the far-field expansion $\Phi_{l,i}$ at $i^{th}$ subinterval of level $l$.
						
	For $i^{th}$ interval of level $l\ (nlevs)$ far-field expansion due to points in interval $[x_0 - r, x_0 + r]$ about center of the interval $x_0$ is defined by a vector of size $p$ as 
						\begin{equation}
						\Phi_{nlevs,i} = \sum_{k=1}^{N} \alpha_k \cdot \frac{t_i}{3r - t_i(x_k -x_0)}.
						\end{equation}
						\textbf{Input:} $l = nlevs$, $\alpha_k$, $x_k$, $x_0$, $r$, $t_i$ for $i = 1,\ldots,2^{nlevs}$, \\
						\textbf{Output:} $\Phi_{nlevs,i}$\\
						\textbf{Complexity:} $O(Np)$\\
						
							\item [STEP 6]\textbf{[Bottom-up approach]}\\ \textbf{Description:} Compute the far-field expansion of individual subintervals in terms of far-field expansion of their children. These are represented by $p \times p $ matrix defined as below.
												
							\begin{eqnarray}
							M_L(i,j) & = & u_j\Big(\frac{t_i}{2+t_i}\Big)\\
							M_R(i,j) & = & u_j\Big(\frac{t_i}{2-t_i}\Big)
							\end{eqnarray}
				Where, $i = 1,\ldots,p$ and $j = 1,\ldots,p$. $\{u_1,\ldots,u_p\}$ and $\{t_1,\ldots,t_p\}$ are as defined in Eq. (\ref{eqn:Define_uj}) and Eq. (\ref{eqn:Define_ti}) respectively.
				
				Far-field expansion for $i^{th}$ subinterval of level $l$ due to far-field expansion of its children is computed by shifting children's far-field expansion by $M_L$ or $M_R$ and adding those shifts as below. \begin{equation}
				\Phi_{l,i} = M_L \cdot \Phi_{l+1,2i-l} + M_R \cdot \Phi_{l+1,2i}\label{eqn:BottomUp}
				\end{equation}
				\textbf{Input:} $M_L$, $M_R$ and $\Phi$\\
				\textbf{Output:} $\Phi_{l,i}$\\
				\textbf{Complexity:} $O(\frac{2Np^2}{s})$\\
				
				\item [STEP 7] \textbf{Description:} Calculate the local expansion $\Psi_{l,i}$ at $i^{th}$ subinterval of level $l$.
									
				For $i^{th}$ interval of level $l$ local expansion due to points outside the interval $[y_0 - r, y_0 + r]$ about center of the interval $y_0$ is defined by a vector of size $p$ as
				\begin{equation}
				\Psi_i = \sum_{k=1}^{N} \alpha_k \cdot \frac{1}{rt_i - (x_k - x_0)}.
				\end{equation}
				\textbf{Input:} $t_i$, $\alpha_k$, $r$, $x_k$,and  $x_0$\\
				\textbf{Output:} $\Psi_i$\\
				\textbf{Complexity:} $O(Np)$\\
				
				\item [STEP 8]\textbf{[Top-down approach]}\\ \textbf{Description:} Compute the local expansion of individual subintervals in terms of local expansion of their parents. These are represented by $p \times p $ matrix defined as below. 
						
				\begin{eqnarray}
				S_L(i,j) & = & u_j\Big(\frac{t_i - 1}{2}\Big)\\
				S_R(i,j) & = & u_j\Big(\frac{t_i + 1}{2}\Big)
				\end{eqnarray}
				Where, $i = 1,\ldots,p$ and $j = 1,\ldots,p$. $\{u_1,\ldots,u_p\}$ and $\{t_1,\ldots,t_p\}$ are as defined in Eq. (\ref{eqn:Define_uj}) and Eq. (\ref{eqn:Define_ti}) respectively.
				
				Compute local expansions using far-field expansion using $p \times p$ matrix defined as below.
						
				\begin{eqnarray}
				T_1(i,j) & = & u_j\Big(\frac{3}{t_i - 6}\Big)\\
				T_2(i,j) & = & u_j\Big(\frac{3}{t_i - 4}\Big)\\
				T_3(i,j) & = & u_j\Big(\frac{3}{t_i + 4}\Big)\\
				T_4(i,j) & = & u_j\Big(\frac{3}{t_i + 6}\Big)
				\end{eqnarray}
				
				Local expansion for each subinterval at finer level is computed using local expansion of their parents. For this, first the local expansion of parent interval are shifted by $S_L$ or $S_R$ and then the result is added with interactions of subinterval with other well separated subintervals (which were not considered at the parent level).
				
				\begin{equation}
				\Psi_{l+1,2i-1}   =  S_L \cdot \Psi_{l,i} + T_1 \cdot \Phi_{l+1,2i-3} + T_3 \cdot \Phi_{l+1,2i+1} + T_4 \cdot \Phi_{l+1,2i+2}
				\end{equation}	
				
				\begin{equation}
				\Psi_{l+1,2i}   =  S_R \cdot \Psi_{l,i} + T_1 \cdot \Phi_{l+1,2i-3} + T_2 \cdot \Phi_{l+1,2i-2} + T_4 \cdot \Phi_{l+1,2i+2}
				\end{equation}
				\textbf{Input:} $S_L$, $S_R$, $T_1$, $T_2$, $T_3$, $T_4$, $\Phi$ and $\Psi$\\
				\textbf{Output:} $\Psi_{l+1,2i-1}$ and $\Psi_{l+1,2i}$\\
				\textbf{Complexity:} $O(\frac{8Np^2}{s})$\\
				
				\item [STEP 9] \textbf{Description:} Evaluate local expansion $\Psi_{nlevs,i}$ at some of $\{y_j\}$ (which falls in subinterval $i$ of level $nlevs$) to obtain a $p$ size vector. \\
				\textbf{Complexity:} $O(Np)$\\
				
				\item [STEP 10] \textbf{Description:} Add all the remaining interactions which are not covered by expansions. Compute interactions of each $\{ y_k\}$ in subinterval $i$ of level $nlevs$ with all $\{x_j\}$ in subinterval $i-1,i,i+1$. Add all these interactions with the respective local expansions.  \\
				\textbf{Complexity:} $O(3Ns)$\\
			 \end{enumerate}
					
				\begin{box}
1				\underline{\textbf{Total Complexity of FMM}}
				
				\begin{eqnarray*}
				& & O(1+1+Np+(2\frac{2Np^2}{s})+ Np + (8\frac{2Np^2}{s})+Np+3Ns)\\
				& = & O(2+3Np+3Ns+(10Np^2/s))\\
				& = & O(2+3Np+6Np+(10Np^2/2p))\\
				& = & O(2+9Np+(5Np))\\
				& = & O(2+14Np)\\
				& = & O(Np)\\
				& = & O\bigg(N \ \text{log}\bigg(\frac{1}{\epsilon}\bigg)\bigg)
				\text{,\ Where $p = \text{log} \bigg(\frac{1}{\epsilon}\bigg)$}			
				\end{eqnarray*}
				\end{box}

\end{document}